\newcommand{\citet}[1]{\citeauthor{#1} \shortcite{#1}}
\newcommand{\Expect}{\mathbb{E}}
\newcommand{\KL}{\text{D}_\text{KL}}
\newcommand{\X}{\mathcal{X}}
\newcommand{\Z}{\mathcal{Z}}
\newcommand{\Y}{\mathcal{Y}}
\newcommand{\sset}[1]{\{#1\}}
\renewcommand{\L}{\mathcal{L}}
\DeclareMathOperator*{\minimize}{min}
\newcommand{\brac}[1]{\left[#1\right]}
\theoremstyle{definition}
\newtheorem{definition}{Definition}
\newtheorem{theorem}{Theorem}
\newtheorem{lemma}[theorem]{Lemma}
\title{Virtual Mixup Training for Unsupervised Domain Adaptation}
    \renewcommand{\copyright@on}{F}
\author{%
Xudong Mao$^{\textbf{1}}$\thanks{indicates equal contribution} \ Yun Ma$^{\textbf{1}*}$ \ Zhenguo Yang$^{\textbf{1}}$ \ Yangbin Chen$^\textbf{2}$\  Qing Li$^\textbf{1}$\\
    $^1$Department of Computing, The Hong Kong Polytechnic University\\
    $^2$Department of Computer Science, City University of Hong Kong\\
    \texttt{xudong.xdmao@gmail.com mayun371@gmail.com} \\
    \texttt{yzgcityu@gmail.com robinchen2-c@my.cityu.edu.hk} \\
    \texttt{csqli@comp.polyu.edu.hk} \\
}
\begin{document}

\maketitle

\begin{abstract}
 We study the problem of unsupervised domain adaptation which aims to adapt models trained on a labeled source domain to a completely unlabeled target domain. Recently, the cluster assumption has been applied to unsupervised domain adaptation and achieved strong performance. One critical factor in successful training of the cluster assumption is to impose the locally-Lipschitz constraint to the model. Existing methods only impose the locally-Lipschitz constraint around the training points while miss the other areas, such as the points in-between training data. In this paper, we address this issue by encouraging the model to behave linearly in-between training points. We propose a new regularization method called Virtual Mixup Training (VMT), which is able to incorporate the locally-Lipschitz constraint to the areas in-between training data. Unlike the traditional mixup model, our method constructs the combination samples without using the label information, allowing it to apply to unsupervised domain adaptation. The proposed method is generic and can be combined with most existing models such as the recent state-of-the-art model called VADA. Extensive experiments demonstrate that VMT significantly improves the performance of VADA on six domain adaptation benchmark datasets. For the challenging task of adapting MNIST to SVHN, VMT can improve the accuracy of VADA by over 30\%. Code is available at \url{https://github.com/xudonmao/VMT}.
\end{abstract}

\section{Introduction}
\label{sec:intro}
Deep neural networks have launched a profound reformation in a wide variety of fields such as image classification \cite{Krizhevsky2012}, detection \cite{Girshick2014}, and segmentation \cite{Long2015_seg}. However, the performance of deep neural networks is often based on large amounts of labeled training data. In real-world tasks, generating labeled training data can be very expensive and may not always be feasible. One approach to this problem is to learn from a related labeled source data and generalize to the unlabeled target data, which is known as domain adaptation. In this work, we consider the problem of unsupervised domain adaptation where the training samples in the target domain are completely unlabeled.

For unsupervised domain adaptation, \citet{Ganin2016} proposed the domain adversarial training to learn domain-invariant features between the source and target domains, which has been a basis for numerous domain adaptation methods \cite{Tzeng2017,Kumar2018,Shu2018,Saito2018,Xie2018}. Most of the follow-up studies focus on how to learn better-aligned domain-invariant features, including the approaches of adversarial discriminative adaptation \cite{Tzeng2017}, maximizing classifier discrepancy \cite{Saito2018}, and class conditional alignment \cite{Xie2018,Kumar2018}.

\begin{figure*}[t]
\begin{subfigure}[b]{0.21\textwidth}
    \includegraphics[width=\textwidth]{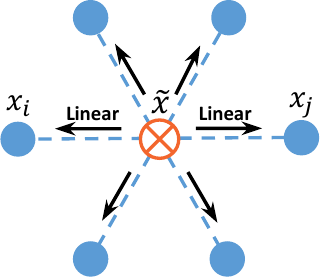}
    \caption{}
\end{subfigure}
\hfill
\begin{subfigure}[b]{0.75\textwidth}
    \includegraphics[width=\textwidth]{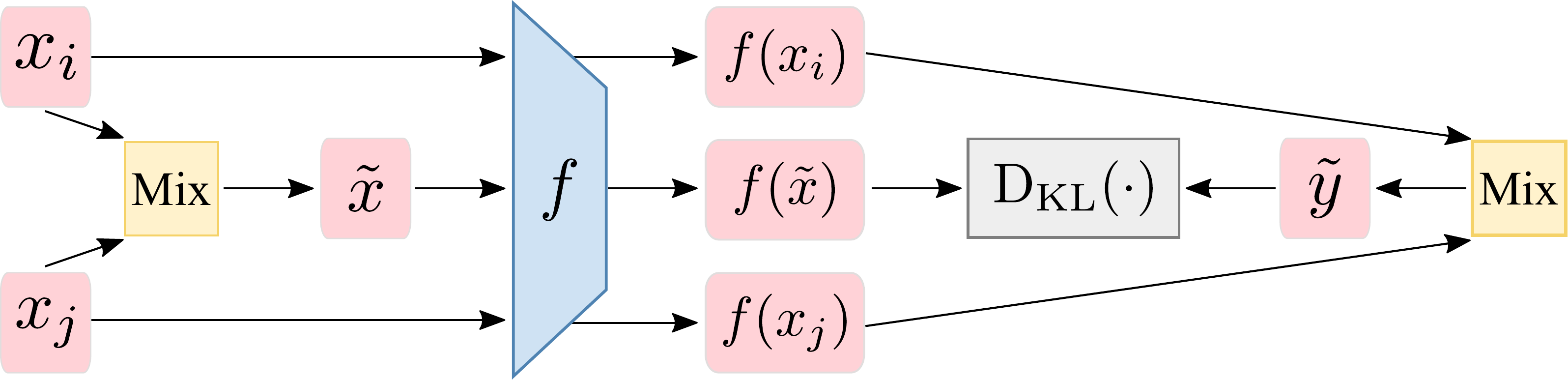}
    \caption{}
\end{subfigure}
\caption{
(a) Illustration of VMT, where blue points denote the training samples and the center point is in the area in-between training samples. For the center point, different pairs of training samples make it behave linearly in different directions, thus regularizing the center point to be locally-Lipschitz. (b) The framework of VMT, where $f$ is a classifier and $\KL(\cdot)$ denotes the KL-divergence.
}

\label{fig:intro}
\end{figure*}

Recently, \citet{Shu2018} have successfully incorporated the cluster assumption \cite{Grandvalet2005} into the framework of domain adversarial training by adopting the conditional entropy loss. They also pointed out that the locally-Lipschitz constraint is critical to the performance of the cluster assumption. Without the locally-Lipschitz constraint, the classifier may abruptly change its predictions in the vicinity of the training samples. This will push the decision boundary close to the high-density regions, which violates the cluster assumption. To this end, they adopted virtual adversarial training \cite{Miyato2018} to constrain the local Lipschitzness of the classifier. 

However, virtual adversarial training only incorporates the locally-Lipschitz constraint to the training points while misses the other areas. To this end, we propose a new regularization method called Virtual Mixup Training (VMT), which is able to regularize the areas in-between training points to be locally-Lipschitz. In general, VMT extends mixup \cite{Zhang2018} by replacing the real labels with virtual labels (i.e., the predicted labels by the classifier), allowing it to apply to unsupervised domain adaptation. Our proposed method is based on the fact that mixup favors linear behavior in-between training samples \cite{Zhang2018}. The idea of VMT is simple yet powerful: as Figure \ref{fig:intro}(a) shows, for the center point, different pairs of training samples make it behave linearly in different directions, and we can easily verify that if the model is linear in all directions at the center point, then the model is locally-Lipschitz at this center point. On the other hand, the locally-Lipschitz constraint is beneficial for pushing the decision boundary away from the high-density regions, which favors the cluster assumption \cite{Ben-David2010,Shu2018}. 

Specifically, as shown in Figure \ref{fig:intro}(b), we first construct convex combinations, denoted as $(\tilde{x}, \tilde{y})$, of pairs of training samples and their virtual labels, and then define a penalty term that punishes the difference between the combined sample's prediction $f(\tilde{x})$ and the combined virtual label $\tilde{y}$. This penalty term encourages a linear change of the output distribution in-between training samples. 

In practice, \citet{Shu2018} pointed out that the conditional entropy loss sometimes behaves unstably and even finds a degenerate solution for some challenging tasks, which also occurs in our model. To tackle this problem, we propose to mixup on the logits (i.e., the input of the softmax layer) instead of the probabilities (i.e., the output of the softmax layer). We argue that the problem of mixup on probabilities is that most of the probability values will tend to be zero since the targets are one-hot vectors, while the logits still have non-zero values.

In the experiments, we combine VMT with a recent state-of-the-art model called VADA \cite{Shu2018}, and evaluate on six commonly used benchmark datasets. The experimental results show that VMT is able to improve the performance of VADA for all tasks. For the most challenging task, MNIST $\rightarrow$ SVHN without instance normalization, our model improves over VADA by 33.6\%.

Our contributions can be summarized as follows:
\begin{itemize}
\item We propose the Virtual Mixup Training (VMT), a new regularization method, to impose the locally-Lipschitz constraint to the areas in-between training data.
\item We propose to mixup on logits rather than mixup on probabilities to further improve the performance and training stability of VMT.
\item We evaluate VMT on six benchmark datasets, and the experimental results demonstrate that VMT can achieve state-of-the-art performance on all the six datasets.
\end{itemize}

\section{Related Work}

\subsubsection{Domain Adaptation}
Domain adaptation has gained extensive attention in recent years due to its advantage of utilizing unlabeled data. A theoretical analysis of domain adaptation was presented in \cite{Ben-David2010}. Early works \cite{Shimodaira2000} tried to minimize the discrepancy distance between the source and target feature distributions. \citet{Long2015} and Sun \& Saenko (\citeyear{Sun2016}) extended this method by matching higher-order statistics of the two distributions. \citet{Huang2007}, \citet{Tzeng2015}, and \citet{Ganin2016} proposed to project the source and target feature distributions into some common space and match the learned features as close as possible. Specifically, \citet{Ganin2016} proposed the domain adversarial training to learn domain-invariant features, which has been a basis of numerous domain adaptation methods \cite{Tzeng2017,Saito2018,Xie2018,Shu2018,Kumar2018}. \citet{Tzeng2017} generalized a framework based on domain adversarial training and proposed to combine the discriminative model and GAN loss \cite{Goodfellow2014}. \citet{Saito2018} proposed to utilize two different classifiers to learn not only domain-invariant but also class-specific features. \citet{Shu2018} proposed to combine the cluster assumption \cite{Grandvalet2005} with domain adversarial training. They also adopted virtual adversarial training \cite{Miyato2018} to constrain the local Lipschitzness of the classifier, as they found that the locally-Lipschitz constraint is critical to the performance of the cluster assumption. \citet{Kumar2018} extended \cite{Shu2018} to align class-specific features by using co-regularization \cite{Sindhwani2005}. We also follow the line of \cite{Shu2018} and propose a new method to impose the locally-Lipschitz constraint to the areas in-between training data. There are also many other promising models including domain separation networks \cite{Konstantinos2016}, reconstruction-classification networks \cite{Ghifary2016}, tri-training \cite{Saito2017}, self-ensembling \cite{French2018}, and image-to-image translation \cite{Bousmalis2017}.

\subsubsection{Local Lipschitzness}
\citet{Grandvalet2005} pointed out that the local Lipschitzness is critical to the performance of the cluster assumption. \citet{Ben-David2014} also showed in theory that Lipschitzness can be viewed as a way of formalizing the cluster assumption. Constraining local Lipschitzness has been proven as effective in semi-supervised learning \cite{Sajjadi2016,Laine2017,Miyato2018} and domain adaptation \cite{French2018,Shu2018}. In general, these methods smooth the output distribution of the model by constructing surrounding points of the training points and enforcing consistent predictions between the surrounding and training points. Specifically, \citet{Sajjadi2016}, and Laine \& Aila (\citeyear{Laine2017}) utilized the randomness of neural networks to construct the surrounding points. \citet{French2018} proposed to construct two different networks and enforce the two networks to output consistent predictions for the same input. \citet{Miyato2018} utilized the adversarial examples \cite{Goodfellow2015} to regularize the model from the direction violating the local Lipschitzness mostly.

\subsubsection{Mixup}
\citet{Zhang2018} proposed a regularization method called mixup to improve the generalization of neural networks. Mixup generates convex combinations of pairs of training examples and their labels, favoring the smoothness of the output distribution. A similar idea was presented in \cite{Tokozume2018} for image classification. \citet{Verma2018} extended mixup by mixing on the output of a random hidden layer. \citet{Guo2019} proposed to learn the mixing policy by an additional network instead of the random policy. A similar idea to ours was described in \cite{Verma2019} for semi-supervised learning. They also used mixup to provide consistent predictions between unlabeled training samples. \citet{Berthelot2019} extended this method by mixing between the labeled and unlabeled samples.

\subsubsection{Virtual Labels}
Virtual (or pseudo) labels have been widely used in semi-supervised learning \cite{Miyato2018} and domain adaptation \cite{Chen2011,Saito2017,Xie2018}. In particular, \citet{Chen2011} and \citet{Saito2017} proposed to first use multiple classifiers to assign virtual labels to the target samples, and then train the classifier using the target samples with virtual labels. \citet{Xie2018} proposed to calculate the class centroids of the virtual labels to reduce the bias caused by the false virtual labels. The most related method to ours is virtual adversarial training \cite{Miyato2018}. Virtual adversarial training enforces the virtual labels of the original sample and its adversarial example to be similar, which can be used to impose the locally-Lipschitz constraint to the training samples.

\section{Preliminaries}
\subsection{Domain Adversarial Training}
We first describe domain adversarial training \cite{Ganin2016} which is a basis of our model. Let $\X_s$ and $\Y_s$ be the distributions of the input sample $x$ and label $y$ from the source domain, and let $\X_t$ be the input distribution of the target domain. Suppose a classifier $f = h \circ g$ can be decomposed into a feature encoder $g$ and an embedding classifier $h$. The input $x$ is first mapped through the feature encoder $g: \X \to \Z$, and then through the embedding classifier $h: \Z \to \Y$. On the other hand, a domain discriminator $d: \Z \to (0,1)$ maps the feature vector to the domain label $(0,1)$. The domain discriminator $d$ and feature encoder $g$ are trained adversarially: $d$ tries to distinguish whether the input sample $x$ is from the source or target domain, while $g$ aims to generate indistinguishable feature vectors of samples from the source and target domains. The objective of domain adversarial training can be formalized as follows: 
\begin{align}
\label{eq:dann}
\begin{split}
\minimize_{f}&  \L_y(f;\X_s, \Y_s) + \lambda_d \L_d(g;\X_s,\X_t), \\
&\L_y(f;\X_s, \Y_s) = -\Expect_{(x, y) \sim (\X_s, \Y_s)} \brac{ y^\top \ln f(x)}, \\
&\L_d(g;\X_s,\X_t) = \sup_{d}\Expect_{x \sim \X_s} \brac{\ln d(g(x))} + \\
&\ \ \ \ \ \ \ \ \ \ \ \ \ \ \ \ \ \ \ \ \ \ \ \ \ \ \ \ \ \ \ \ \ \ \Expect_{x \sim \X_t} \brac{\ln (1 - d(g(x)))},
\end{split}
\end{align}
where $\lambda_d$ is used to adjust the weight of $\L_d$.

\subsection{Cluster Assumption}
\label{sec:cluster}
The cluster assumption states that the input data contains clusters, and if samples are in the same cluster, they come from the same class \cite{Grandvalet2005}. It has been widely used in semi-supervised learning \cite{Grandvalet2005,Sajjadi2016,Miyato2018}, and recently has been applied to unsupervised domain adaptation \cite{Shu2018}. The conditional entropy minimization is usually adopted to enforce the behavior of the cluster assumption \cite{Grandvalet2005,Sajjadi2016,Miyato2018,Shu2018}:
\begin{align}
\label{eq:entropy}
\L_c(f; \X_t) = -\Expect_{x \sim \X_t} \brac{f(x)^\top \ln f(x)}.
\end{align}

\subsection{Local Lipschitzness}
We first recall the definition of local Lipschitzness:
\theoremstyle{definition}
\begin{definition}{(Local Lipschitzness)}
We say a function $f: \X \to \Y$ is locally-Lipschitz, if for each $x_0 \in \X$, there exists constants $L>0$ and $\delta_0>0$ such that $||f(x)-f(x_0)||\leq L||x-x_0||$ holds for all $||x-x_0||<\delta_0$, $x \in \X$.
\label{def:local}
\end{definition}
\citet{Shu2018} pointed out that the locally-Lipschitz constraint is critical to the performance of the cluster assumption, and adopted virtual adversarial training \cite{Miyato2018} to impose the locally-Lipschitz constraint:
\begin{align}
\L_v(f; \X) = \Expect_{x \sim \X} \brac{\max_{\| r\| \le \epsilon} \KL(f(x) \| f(x + r))}.
\label{eq:vat}
\end{align}

\section{Virtual Mixup Training}
\citet{Shu2018} adopted two methods, including the conditional entropy (Eq. \ref{eq:entropy}) and virtual adversarial training (Eq. \ref{eq:vat}), to enforce the cluster assumption. Minimizing the conditional entropy forces the classifier to be confident on the training points, thus driving the decision boundary away from the high-density regions. On the other hand, virtual adversarial training imposes the locally-Lipschitz constraint to the training points, and the locally-Lipschitz constraint can favor the cluster assumption, because it prevents the classifier to abruptly change its predictions in the vicinity of the training points, thus driving the decision boundary away from the high-density regions. However, both the conditional entropy and virtual adversarial training only consider the areas around the training points while miss the other areas such as the points in-between training data. To remedy this problem, we propose the Virtual Mixup Training (VMT), which is able to incorporate the locally-Lipschitz constraint to the areas in-between training data. 

The original mixup \cite{Zhang2018} model has shown the ability to enforce the classifier to behave linearly in-between training samples by applying the following convex combinations of labeled samples:
\begin{align}
\begin{split}
\label{eq:mixup}
  \tilde{x} &= \lambda x_i + (1 - \lambda) x_j,\\
  \tilde{y} &= \lambda y_i + (1 - \lambda) y_j.
\end{split}
\end{align}
However, for unsupervised domain adaptation, we have no direct information about $y_i$ and $y_j$ of the target domain. Inspired by \cite{Miyato2018}, we replace $y_i$ and $y_j$ with the approximations, $f(x_i)$ and $f(x_j)$, which are the current predictions by the classifier $f$. Literally, we call $f(x_i)$ and $f(x_j)$ virtual labels, and formalize our proposed VMT as follows:
\begin{align}
\label{eq:vmt}
\begin{split}
  \tilde{x} &= \lambda x_i + (1 - \lambda) x_j,\\
  \tilde{y} &= \lambda f(x_i) + (1 - \lambda) f(x_j),
\end{split}
\end{align}
where $\lambda \sim \text{Beta}(\alpha, \alpha)$, for $\alpha \in (0, \infty)$. Then we penalize the difference between the prediction $f(\tilde{x})$ and the virtual label $\tilde{y}$:
\begin{align}
\label{eq:vmt_loss}
\L_m(f; \X) = \Expect_{x \sim \X} \brac{ \KL(\tilde{y} \| f(\tilde{x}))}. 
\end{align}

Like the original mixup, our proposed VMT also encourages the classifier $f$ to behave linearly between $x_i$ and $x_j$. Moreover, as shown in Figure \ref{fig:intro}(a), for the virtual point $\tilde{x}$, different pairs of $x_i$ and $x_j$ enforce it to behave linearly in different directions, and we can verify that if $f$ is linear in all directions at $\tilde{x}$, then $f$ is locally-Lipschitz at $\tilde{x}$. We prove the linear behavior and the local Lipschitzness of VMT in the Appendix. We also empirically verify this by plotting in Figure \ref{fig:grad} the gradient norms of VADA and VMT in-between training samples, since the gradient norm is an indicator of the local Lipschitzness \cite{Hein2017}. It shows that VMT has much smaller gradient norms than VADA.

The proposed VMT is generic and can be combined with most existing models. In particular, we can combine VMT with a recent state-of-the-art model, VADA, which then leads us to the following objective:
\begin{align}
\label{eq:final_obj}
\begin{split}
&\minimize_{f} \L_{y,d} + \lambda_s [\L_m(f; \X_s) + \L_v(f; \X_s)]+ \\
&\ \ \ \ \ \ \ \ \ \ \ \ \ \ \ \ \ \ \ \ \lambda_t [\L_m(f; \X_t)+ \L_v(f; \X_t)+\L_c(f; \X_t) ], \\
\end{split}
\end{align}
where $\L_{y,d} = \L_y(f;\X_s, \Y_s) + \lambda_d \L_d(g;\X_s,\X_t)$ and $(\lambda_s, \lambda_t)$ are used to adjust the weights of the penalty terms on the source and target domains. Note that we also incorporate virtual adversarial training in Eq. \ref{eq:final_obj}, and empirically show in the experiments that VMT is orthogonal to virtual adversarial training for most tasks. For the source domain, we also replace $y_i$ and $y_j$ with the virtual labels, without using the label information.

Like the original mixup, the implementation of VMT is also simple and straightforward. One important advantage of VMT is the low computational cost, and we show in the experiments that VMT has a much lower computational cost than virtual adversarial training. Despite its simplicity, VMT achieves a new state-of-the-art performance on several benchmark datasets.

\begin{figure}[t]
\centering
 \includegraphics[width=0.3\textwidth]{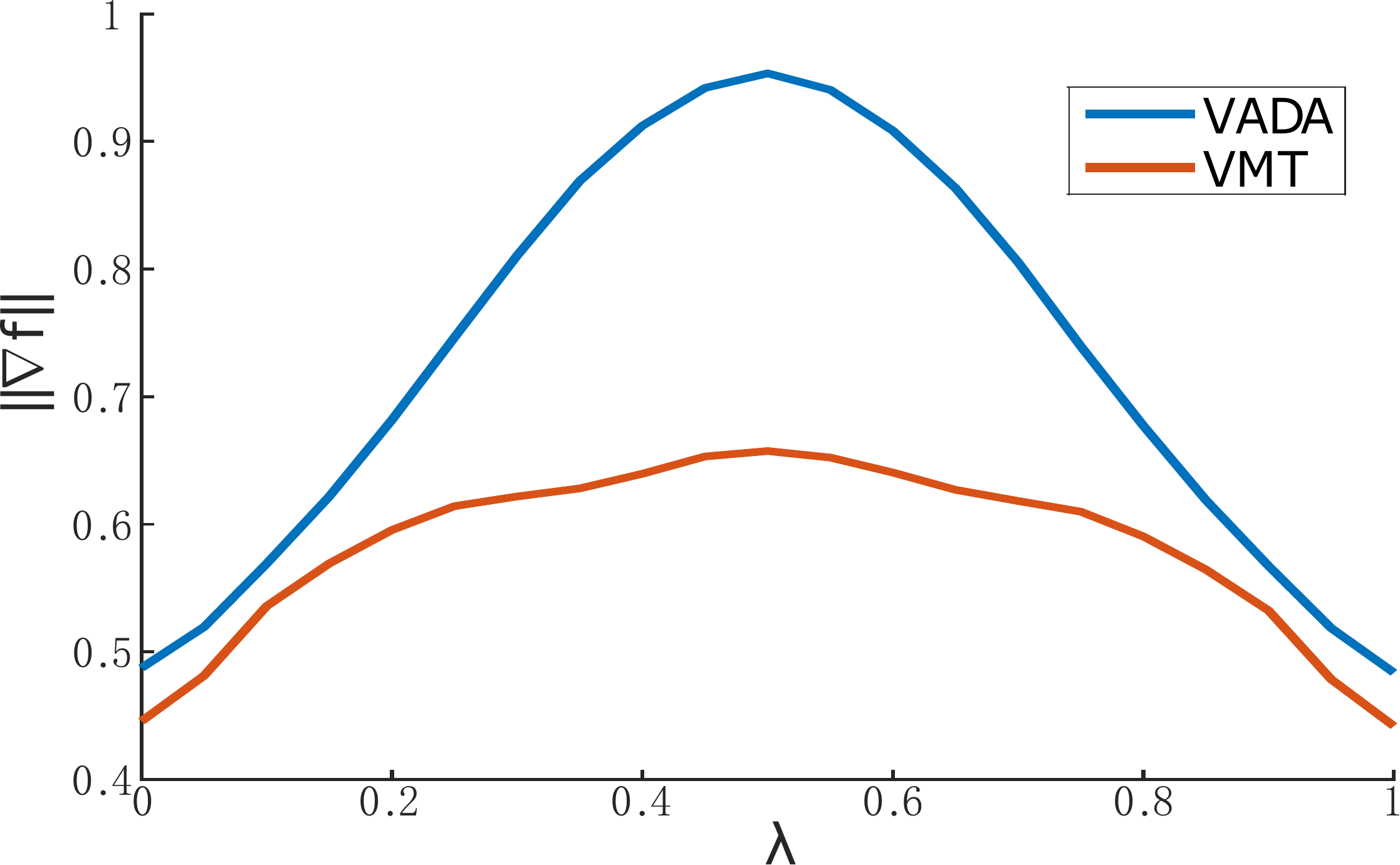}
\caption{Gradient norms of the classifier $f$ with respect to the input in-between training samples: $\tilde{x} = \lambda x_i + (1 - \lambda) x_j$. Both models are trained with the same architecture and evaluated on the whole test set. VMT has much smaller gradient norms than VADA.}
\label{fig:grad}
\end{figure}

\subsection{Mixup on Logits}

As stated in \cite{Shu2018}, VADA shows high-variance results for some tasks and even finds a degenerate solution quickly. In practice, we also observe that Eq. \ref{eq:vmt} sometimes collapse to a degenerate solution. To tackle this problem, we propose to mixup-on-logits (i.e., the input of the softmax layer) instead of mixup-on-probabilities (i.e., the output of the softmax layer), which is also studied in literature \cite{Varga2017} for supervised learning. Let $f_{\text{logits}}$ denote the layers before the softmax layer and $f_{\text{softmax}}$ denote the softmax layer. Then Eq. \ref{eq:vmt} is modified as:
\begin{align}
\label{eq:vmt_logits}
\begin{split}
  \tilde{y} = f_{\text{softmax}}(\lambda f_{\text{logits}}(x_i) + (1 - \lambda) f_{\text{logits}}(x_j)).
\end{split}
\end{align}
The problem of mixup-on-probabilities is that most of the probabilities will tend to be zero since the targets are one-hot vectors, and mixup between zeros vanishes the effect of favoring linear behavior. This problem does not arise in mixup-on-logits because the logits still have non-zero values even though the probabilities are close to zero. In the experiments, we empirically verify that mixup-on-logits performs more stably than mixup-on-probabilities, especially for the challenging task of MNIST $\rightarrow$ SVHN. 

\begin{table*}[t]
\centering
\begin{tabular}{l|cccccc}
\toprule
\multicolumn{1}{r|}{Source}             & MNIST         & SVHN          & MNIST     & SYN        & CIFAR         & STL   \\
\multicolumn{1}{r|}{Target}             & SVHN          & MNIST         & MNIST-M   & SVHN       & STL           & CIFAR \\
\midrule
MMD \cite{Long2015}                     &    -          & $71.1$        & $76.9$    & $88.0$     &    -          &    -  \\ 
DANN \cite{Ganin2016}                   & $35.7$        & $71.1$        & $81.5$    & $90.3$     &    -          &    -   \\
DRCN \cite{Ghifary2016}                 & $40.1$        & $82.0$        &    -      &    -       & $66.4$        & $58.7$ \\
DSN \cite{Bousmalis2016}                &    -          & $82.7$        & $83.2$    & $91.2$     &    -          &    -   \\
kNN-Ad \cite{Sener2016}                 & $40.3$        & $78.8$        & $86.7$    &    -       &    -          &    -   \\    
PixelDA \cite{Bousmalis2017}            & -             & -             & $98.2$    & -          & -             & -      \\    
ATT \cite{Saito2017}                    & $52.8$        & $86.2$        & $94.2$    & $92.9$     &    -          &    -   \\    
$\Pi$-model (aug) \cite{French2018}     & $71.4$        & $92.0$        &    -      & $94.2$     & $76.3$        & $64.2$ \\
\midrule
\multicolumn{7}{c}{Without Instance-Normalized Input:}\\
\midrule
Source-Only                             & $27.9$        & $77.0$        & $58.5$    & $86.9$     & $76.3$        & $63.6$ \\
\midrule                                                                
VADA \cite{Shu2018}                     & $47.5$        & $97.9$        & $97.7$    & $94.8$     & $80.0$        & $73.5$  \\    
Co-DA \cite{Kumar2018}                  & $55.3$        & $\bf{98.8}$   &$\bf{99.0}$& $96.1$     & $81.4$        & $76.4$  \\    
VMT (ours)                              & $\bf{59.3}$   &$\bf{98.8}$    &$\bf{99.0}$& $\bf{96.2}$& $\bf{82.0}$   & $\bf{80.2}$  \\    
\midrule                                                                
VADA + DIRT-T  \cite{Shu2018}             & $54.5$        & $99.4$        & $98.9$    & $96.1$     & -             & $75.3$ \\
Co-DA + DIRT-T \cite{Kumar2018}           & $63.0$        & $99.4$        &$\bf{99.1}$&$\bf{96.5}$ & -             & $77.6$  \\  
VMT + DIRT-T (ours)                       & $\bf{88.1}$  &$\bf{99.5}$     &$\bf{99.1}$&$\bf{96.5}$  & -            & $\bf{80.6}$  \\    
\midrule
\multicolumn{7}{c}{With Instance-Normalized Input:}\\
\midrule
Source-Only                             & $40.9$        & $82.4$        & $59.9$    & $88.6$     & $77.0$        & $62.6$ \\
\midrule                                                                
VADA \cite{Shu2018}                     & $73.3$        & $94.5$        & $95.7$    & $94.9$     & $78.3$        & $71.4$ \\
Co-DA \cite{Kumar2018}                  & $81.7$        & $98.7$        &$\bf{98.0}$& $96.0$     & $80.6$        & $74.7$  \\    
VMT (ours)                              & $\bf{85.2}$  & $\bf{98.9}$    &$\bf{98.0}$& $\bf{96.4}$& $\bf{81.3}$   & $\bf{79.5}$  \\
\midrule
VADA + DIRT-T \cite{Shu2018}              & $76.5$      & $\bf{99.4}$          & $98.7$    & $96.2$     & -             & $73.3$ \\
Co-DA + DIRT-T \cite{Kumar2018}           & $88.0$      & $\bf{99.4}$          & $98.8$    & $96.5$     & -             & $75.9$  \\  
VMT + DIRT-T (ours)                       & $\bf{95.1}$ & $\bf{99.4}$     &$\bf{98.9}$& $\bf{96.6}$& -             & $\bf{80.2}$  \\
\midrule
\end{tabular}
\caption{Test set accuracy on the visual domain adaptation benchmark datasets. For all tasks, VMT improves the accuracy of VADA and achieves the state-of-the-art performance.} 
\label{table:evaluation}
\end{table*}

\section{Experiments}
\label{sec:exp}
In our experiments, we focus on the visual domain adaptation and evaluate our model on six benchmark datasets including MNIST, MNIST-M, Synthetic Digits (SYN), Street View House Numbers (SVHN), CIFAR-10, and STL-10.

\subsection{Iterative Refinement Training}
Following VADA, we also perform an iterative refinement training technique called DIRT-T  \cite{Shu2018} for further optimizing the cluster assumption on the target domain. Specifically, we first initialize with a trained VMT model using Eq. \ref{eq:final_obj}, and then iteratively minimize the following objective on the target domain:
\begin{align}
\minimize_{f_n} \lambda_t \L_t(f_n;\X_t) + \beta\Expect\brac{\KL(f_{n-1}(x) \| f_n(x))},
\end{align}
where $\L_t = \L_m + \L_v + \L_c$. In practice, for the initialization model, we do not need to train the VMT model until convergence, and pre-training for 40000 or 80000 iterations is enough to achieve good performance. We report the results of using and without using DIRT-T in the following experiments.

\subsection{Hyperparameters}
Following \cite{Shu2018}, we tune the four hyperparameters $(\lambda_d,\lambda_s,\lambda_t,\beta)$ by randomly selecting 1000 labeled target samples from the training set as the validation set. We restrict the hyperparameter search to $\lambda_d = \sset{0, 0.01}$, $\lambda_s = \sset{0, 1}$, $\lambda_t = \sset{0.01,0.02,0.04,0.06,0.08,0.1,1}$, and $\beta = \sset{0.001,0.01,0.1,1}$; $\alpha$ in Eq. \ref{eq:vmt} is fixed as $1$ for all experiments. A complete list of the hyperparameters is presented in the Appendix.

Compared with VADA, the main different setting is $\lambda_t$. We empirically find that increasing the value of $\lambda_t$ is able to improve the performance significantly. But for VADA, it will collapse to a degenerate solution if we set the same value of $\lambda_t$.

\begin{table*}[t]
\centering
\begin{tabular}{c|cccccccccc|c}
\midrule
\multicolumn{11}{c|}{MNIST $\rightarrow$ SVHN without Instance-Normalized Input:} & Average\\
 \midrule
 VMT@40K & $57.7$ & $57.6$ & $57.4$ & $50.1$ & $54.1$ & $50.0$& $52.7$& $40.1$& $53.9$& $45.5$ & $51.9 \pm 5.7$\\
\midrule
 VMT@160K & $64.6$ & $64.6$ & $58.3$ & $61.4$ & $57.3$ & $63.7$& $64.5$& $49.4$& $56.9$& $52.0$ & $59.3 \pm 5.5$\\
\midrule
 VMT + DIRT-T@160K & $95.9$ & $95.9$ & $95.3$ & $95.3$ & $95.1$ & $83.6$& $82.8$& $80.4$& $79.9$& $76.6$ & $88.1 \pm 8.0$\\
\midrule
\multicolumn{11}{c|}{MNIST $\rightarrow$ SVHN with Instance-Normalized Input:} & Average\\
\midrule
 VMT@40K         & $86.1$ & $85.6$ & $86.1$ & $85.8$ & $85.2$ & $85.6$& $84.3$& $84.0$& $84.5$& $84.3$ & $85.2 \pm 0.8$\\
\midrule
 VMT + DIRT-T@160K  & $96.0$ & $95.9$ & $95.6$ & $95.5$ & $95.4$ & $95.3$& $95.3$& $95.1$& $94.2$& $92.8$ & $95.1 \pm 1.0$\\
\midrule
\end{tabular}
\caption{Test set accuracies of 10 runs at difference stages of training. VMT@40K denotes the accuracy of VMT at iteration $40000$. DIRT-T takes the VMT@40K model as the initialization model. Our model shows small variance in performance for the task with instance normalization, while shows moderate variance for the task without instance normalization.} 
\label{table:m2s}
\end{table*}

\begin{table*}[t]
\centering
\begin{tabular}{c|cccccccccc|c}
\midrule
\multicolumn{11}{c|}{MNIST $\rightarrow$ SVHN without Instance-Normalized Input:}&Average\\
 \midrule
 $\text{VMT}_{\text{prob}}$@40K & $47.5$ & $54.0$ & $50.1$ & $49.6$ & $50.5$ & $50.2$ & $44.8$& $48.1$& $38.4$& $43.1$ & $47.6 \pm 4.5$\\
\midrule
 $\text{VMT}_{\text{prob}}$@160K & $59.2$ & $62.3$ & $61.0$ & $60.0$ & $60.3$ & $65.3$ & $57.0$& $60.4$& $21.7$& $51.1$ & $55.8 \pm 12.5$\\
\midrule
 $\text{VMT}_{\text{prob}}$ + DIRT-T@160K & $95.0$ & $94.4$ & $93.3$ & $91.9$ & $89.7$ & $88.5$ & $78.3$& $77.6$& $19.6$& $15.6$ & $74.4 \pm 30.6$\\
\midrule
\end{tabular}
\caption{Test set accuracies of VMT with mixup-on-probabilities. The results indicate high variance in performance. The model sometimes behaves unstably and collapses to a degenerate solution.} 
\label{table:prob}
\end{table*}

\subsection{Implementation Detail}

\subsubsection{Architecture} 
We use the same network architectures as the ones in VADA\cite{Shu2018} for a fair comparison. In particular, a small CNN is used for the tasks of digits, and a larger CNN is used for the tasks between CIFAR-10 and STL-10. 

\subsubsection{Baselines}
We primarily compare our model with two baselines: VADA \cite{Shu2018} and Co-DA \cite{Kumar2018}. Also based on VADA, Co-DA used a co-regularization method to make a better domain alignment. We also show the results of several other recently proposed unsupervised domain adaptation models for comparison.

\subsubsection{Others}
Following \cite{Shu2018}, we replace gradient reversal \cite{Ganin2016} with the adversarial training \cite{Goodfellow2014} of alternating updates between the domain discriminator and feature encoder. We also follow \cite{Shu2018} to apply the instance normalization to the input images, and report the results of using or without using the instance normalization. The implementation of our model is based on the official implementation of VADA.

\subsection{MNIST $\rightarrow$ SVHN}
We first evaluate VMT on the adaptation task from MNIST to SVHN, which is usually regarded as a challenging task \cite{Ganin2016,Shu2018}. It is especially difficult when the input is not instance-normalized.

\subsubsection{Without Instance Normalization}
When not applying instance normalization, VADA removes the conditional entropy minimization during training, as it behaves unstably and finds a degenerate solution quickly \cite{Shu2018}. We find that this problem no longer exists in our model, and thus we keep the conditional entropy minimization during training. As Table \ref{table:evaluation} shows, when not applying DIRT-T, VMT outperforms VADA and Co-DA by 11.8\% and 4\%, respectively. When applying DIRT-T, VMT outperforms VADA and Co-DA by $33.6\%$ and $25.1\%$, respectively. 

The reported accuracies are averaged over 10 trials, and we list the complete results in Table \ref{table:m2s}. We have the following three observations from Table \ref{table:m2s}. First, the results indicate moderate variance in performance, but even in the worst-case (76.6\%), it still outperforms VADA and Co-DA by 22.1\% and 13.6\%. Second, the best model can achieve an accuracy of $95.9\%$, which is very close to the one applying instance normalization to the input. Third, generally speaking, if the model shows good performance at iteration $40000$, it usually can achieve good results for both the VMT model and DIRT-T model. 

Note that the reported results of VMT and DIRT-T are both achieved at iteration $160000$. Interestingly, if we continue to train the DIRT-T model, we observe an accuracy of $96.4\%$ at iteration $260000$. Moreover, we train a classifier on the target domain (i.e., SVHN) with labels revealed using the same network architecture and same settings, and it is usually treated as an upper bound for domain adaptation models. This train-on-target model achieves an accuracy of 96.5\%. Our model achieves a very close accuracy (96.4\%) to the upper bound (96.5\%).

The reported results are achieved under $\lambda_t=0.01$. If we set $\lambda_t$ to $0.02$, VMT@160K generally performs better and can achieve an accuracy of $73.7\%$ but sometimes collapses to a degenerate solution. More discussion about the performance of  $\lambda_t=0.02$ is provided in the Appendix.

\subsubsection{With Instance Normalization}
When applying instance normalization, \mbox{VMT + DIRT-T} outperforms VADA and Co-DA by $18.6\%$ and $7.1\%$, respectively. Moreover, we also show the results of 10 runs in Table \ref{table:m2s}. Different from the case without instance normalization, the results of applying instance normalization show small variance in performance. Note that compared with VADA, we set a larger $\lambda_t=0.06$, which is able to improve the performance, while VADA will collapse to a degenerate solution if we set the same value of $\lambda_t$. We show the comparison results of VMT between $\lambda_t=0.01$ and $\lambda_t=0.06$ in the Appendix. Similar to the case without instance normalization, if we continue to train the DIRT-T model, VMT + DIRT-T can achieve an accuracy of $96.4\%$ at iteration $280000$.

\begin{figure*}[t]
\centering
\begin{tabular}{ccccc}

 \includegraphics[width=0.175\textwidth]{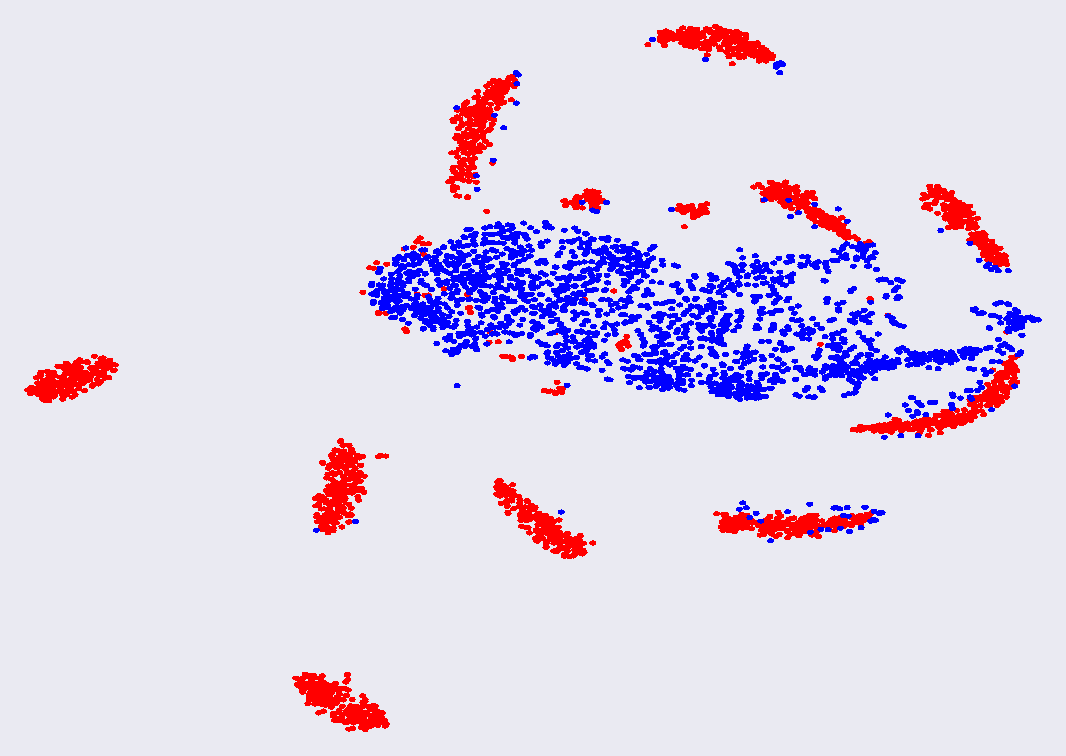}
 &
 \includegraphics[width=0.175\textwidth]{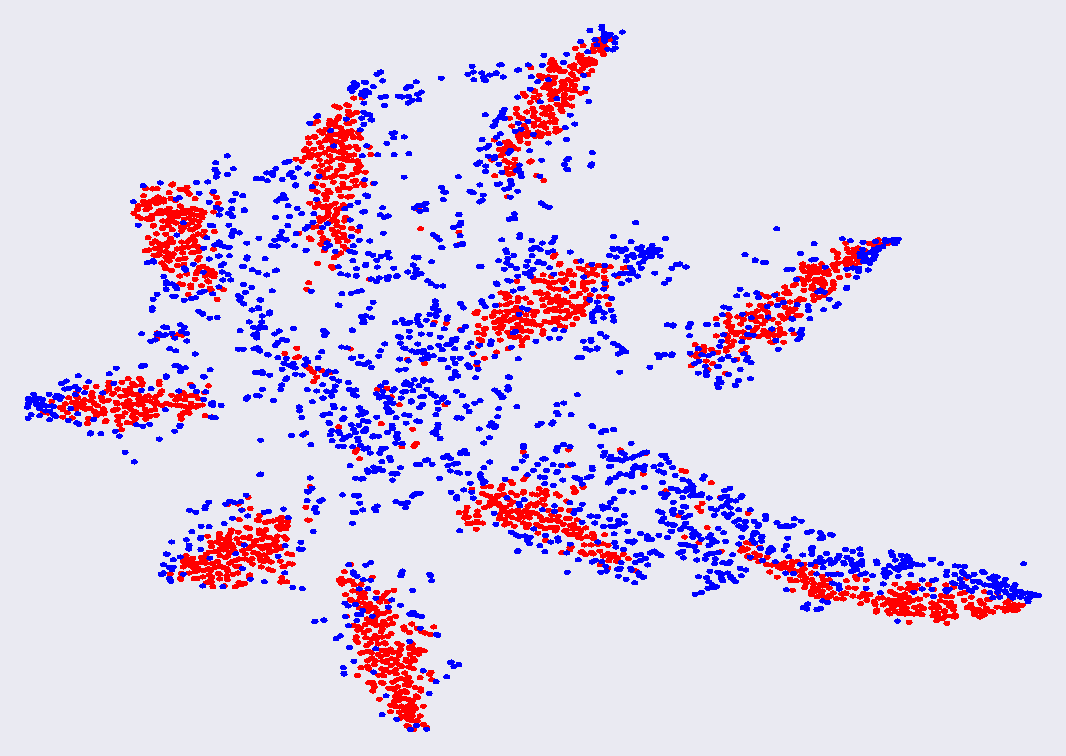}
  &
 \includegraphics[width=0.175\textwidth]{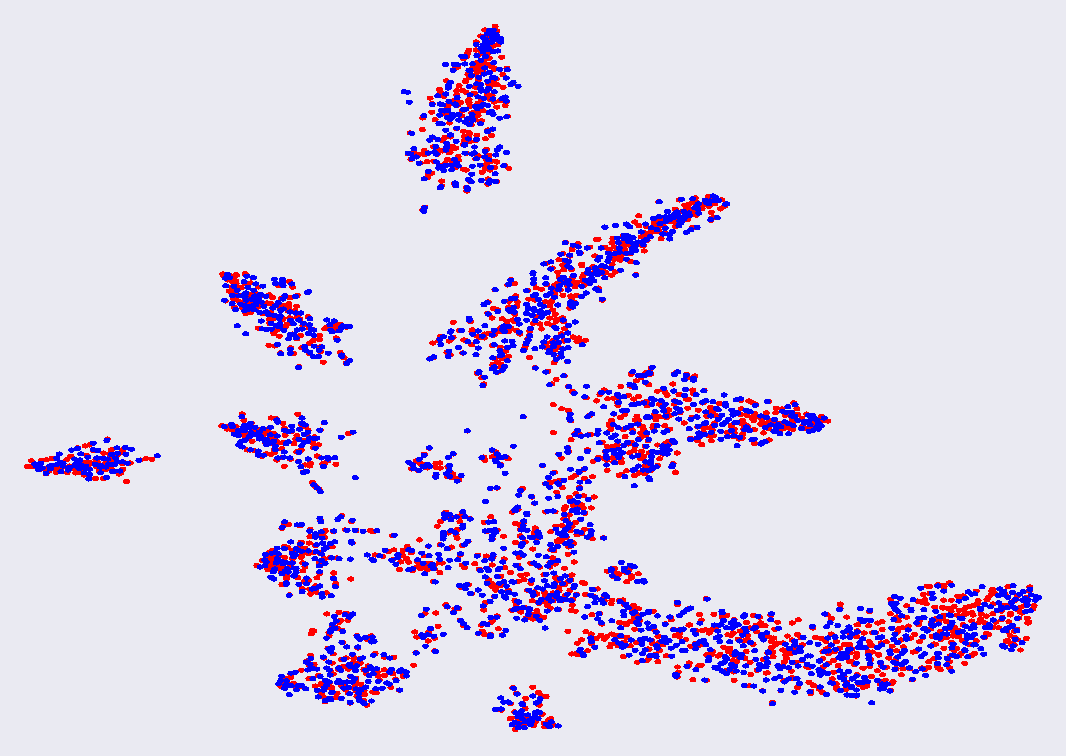}
 &
 \includegraphics[width=0.175\textwidth]{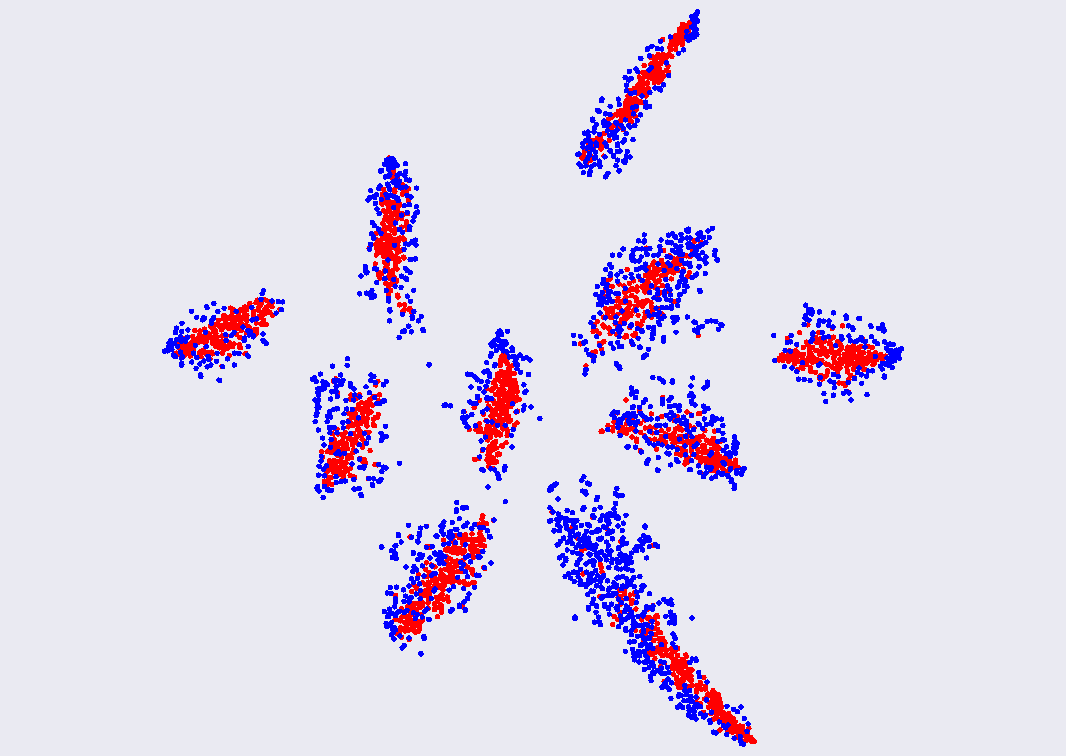}
 &
 \includegraphics[width=0.175\textwidth]{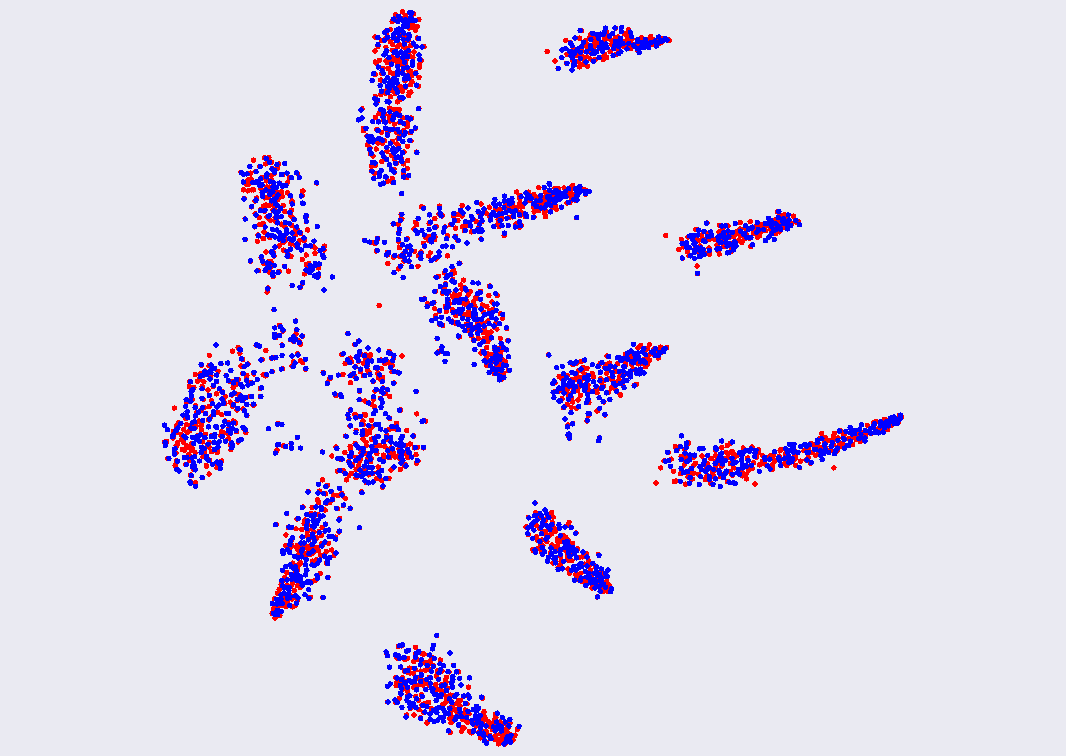}
\\
(a) Source only & (b) VADA & (c) VADA + DIRT-T & (d) VMT & (e) VMT + DIRT-T
\end{tabular}
\caption{
T-SNE visualization of the last hidden layer for MNIST (red) to SVHN (blue) without instance normalization. Compared with VADA, VMT generates closer features vectors for the source and target domains, and shows stronger clustering performance for the target domain. VMT + DIRT-T makes the source and target features closest.
}
\label{fig:visualization}
\end{figure*}

\subsection{Other Digits Adaptation Tasks}
For the other digits adaptation tasks including SVHN $\rightarrow$ MNIST, MNIST $\rightarrow$ MNIST-M, and SYN DIGITS $\rightarrow$ SVHN, the baselines already achieve high accuracies. As Table \ref{table:evaluation} shows, for these tasks, VMT still outperforms VADA and shows similar performance to Co-DA.

\subsection{CIFAR-10 $\leftrightarrow$ STL-10.} 
For CIFAR-10 and STL-10, there are nine overlapping classes between the two datasets. Following \cite{French2018,Shu2018,Kumar2018}, we remove the non-overlapping classes and remain the nine overlapping classes. STL-10 $\rightarrow$ CIFAR-10 is more difficult than CIFAR-10 $\rightarrow$ STL-10, as STL-10 has less labeled samples than CIFAR-10. We observe more significant gains for the harder task of STL-10 $\rightarrow$ CIFAR-10. VMT outperforms VADA by $8.1\%$ and $6.7\%$, and outperforms Co-DA by $4.8\%$ and $3.8\%$, for with and without instance normalization, respectively. For CIFAR-10 $\rightarrow$ STL-10, VMT improves over VADA by $3\%$ and $2\%$ for with and without instance normalization, respectively. Note that DIRT-T has no effect on this task, because STL-10 contains a very small training set, making it difficult to estimate the conditional entropy.

\subsection{Mixup on Logits or Probabilities?}
The inputs and outputs of the softmax layer are called logits and probabilities, respectively. In this experiment, we compare the performance of two schemas: mixup-on-logits (Eq. \ref{eq:vmt_logits}) and mixup-on-probabilities (Eq. \ref{eq:vmt}). The results of mixup-on-logits and mixup-on-probabilities are shown in Table \ref{table:m2s} and Table \ref{table:prob}, respectively. We can observe that mixup-on-logits performs better and more stable than mixup on probabilities. Mixup-on-probabilities sometimes collapses to a degenerate solution whose accuracy is only $15.6\%$. Moreover, we have also investigated the performance of mixup on some intermediate layers in the Appendix, and mixup-on-logits achieves the best performance.

\subsection{Comparing with Virtual Adversarial Training}
\label{sec:cmp_vat}
Virtual adversarial training (VAT) \cite{Miyato2018} is another approach to impose the locally-Lipschitz constraint, as used in VADA. We conduct comparison experiments between VAT (Eq. \ref{eq:vat}) and our proposed VMT (Eq. \ref{eq:vmt_loss}), and the results are shown in Table \ref{table:cmp_vat}. VMT achieves higher accuracies than VAT for all the tasks, which demonstrates that VMT surpasses VAT in favoring the cluster assumption. Furthermore, combining VMT and VAT is able to further improve the performance except for CIFAR-10 $\rightarrow$ STL-10. This shows that VMT is orthogonal to VAT for most tasks, and they can be used together to constrain the local Lipschitzness. Compared with VAT, another advantage of VMT is the low computational cost. For the task of MNIST $\rightarrow$ SVHN with instance normalization, VMT costs about 100 seconds for 1000 iterations in our GPU server, while VAT needs about 140 seconds. The dynamic comparison of the accuracy over time is presented in the Appendix.

\begin{table}[t]
\centering
\small
\setlength\tabcolsep{1.7pt}
\begin{tabular}{c|cccccc}
\toprule
\multicolumn{1}{c|}{Source}  & MNIST & SVHN  & MNIST               & SYN    & CIFAR & STL\\
\multicolumn{1}{c|}{Target}  & SVHN  & MNIST & MNISTM             & SVHN   & STL   & CIFAR\\
\midrule
\multicolumn{7}{c}{With Instance-Normalized Input:}\\
\midrule
 $\L_c$                  & $66.8$ & $83.1$ & $93.8$              & $93.4$ & $79.1$         & $68.6$\\
\midrule
 $\L_c, \L_v$            & $73.3$ & $94.5$ & $95.7$             & $94.9$ & $78.3$         & $71.4$ \\
\midrule
 $\L_c, \L_m$            & $82.6$ & $98.5$    & $97.3$                 & $95.6$ & $\bf{82.0}$ & $78.3$  \\
 \midrule
 $\L_c, \L_v, \L_m$      & $\bf{85.2}$&$\bf{98.9}$&$\bf{98.0}$ &$\bf{96.4}$&$81.3$     & $\bf{79.5}$  \\
\midrule
\end{tabular}
\caption{Test set accuracy in comparison experiments between VAT and VMT.  $\L_c$ denotes the conditional entropy loss, $\L_v$ denotes the VAT loss, and $\L_m$ denotes the VMT loss. $\{\L_c, \L_m\}$ means that we only use $\L_{y,d}$, $\L_c$, and $\L_m$ in Eq. \ref{eq:final_obj}, and set the weights of the other losses to 0. The results of $\{\L_c\}$ and $\{\L_c, \L_v\}$ are duplicated from \cite{Shu2018}.} 
\label{table:cmp_vat}
\end{table}

\subsection{Visualization of Representation}
We further present the T-SNE visualization results in Figure \ref{fig:visualization}. We use the most challenging task (i.e., MNIST $\rightarrow$ SVHN without instance normalization) to highlight the differences. As shown in Figure \ref{fig:visualization}, source-only training shows a discriminative clustering result for the source domain but generates only one cluster for the target domain. We can observe that VMT makes the features from the source and target domains much closer than VADA, and shows stronger clustering performance of the target samples than VADA does. VMT + DIRT-T can further get closer feature vectors of the source and target domains.

\section{Conclusion}
In this paper, we proposed a novel method called Virtual Mixup Training (VMT), for unsupervised domain adaptation. VMT is designed to constrain the local Lipschitzness to favor the cluster assumption. The idea of VMT is to make linearly-change predictions along the lines between pairs of training samples. We empirically show that VMT significantly improves the performance of the recent state-of-the-art model called VADA. For a challenging task of adapting MNIST to SVHN, VMT can outperform VADA by over 33.6\%.


\bibliography{aaai20}
\bibliographystyle{aaai}
\clearpage
\appendix
\onecolumn

\section{Appendix}

\subsection{Proof about the Linear Behavior of VMT}
\begin{lemma}
Optimizing Eq. \ref{eq:vmt_loss} encourages the classifier $f$ to behave linearly between $x_i$ and $x_j$.
\end{lemma}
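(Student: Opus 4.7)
The plan is to show that the global minimum of the VMT penalty $\L_m$ is attained precisely when $f$ behaves linearly on line segments between pairs of training samples, so that minimizing $\L_m$ drives $f$ toward this linear interpolation property.

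First, I would invoke the fundamental nonnegativity property of the Kullback--Leibler divergence: $\KL(p \| q) \geq 0$ with equality if and only if $p = q$. Since $\L_m$ is an expectation of KL divergences over pairs $(x_i, x_j)$ drawn from $\X$ and over $\lambda \sim \text{Beta}(\alpha, \alpha)$, it follows that $\L_m \geq 0$, with equality attained if and only if $f(\tilde{x}) = \tilde{y}$ for almost every triple $(x_i, x_j, \lambda)$ in the support of the sampling distribution.

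Next, I would substitute the definitions of $\tilde{x}$ and $\tilde{y}$ from Eq. \ref{eq:vmt} into this equality. The condition becomes
\begin{equation*}
f\bigl(\lambda x_i + (1-\lambda) x_j\bigr) = \lambda f(x_i) + (1-\lambda) f(x_j),
\end{equation*}
required to hold for almost every $\lambda$ in the support of $\text{Beta}(\alpha, \alpha)$. Since this support is dense in $[0,1]$ for any $\alpha \in (0, \infty)$ and $f$ is continuous (as a composition of the standard neural-network layers used here), the identity extends to all $\lambda \in [0,1]$. Thus $f$ is affine along the segment from $x_i$ to $x_j$, and since the argument holds for $\X$-almost every pair $(x_i, x_j)$, the classifier is pushed toward linear behavior between training samples, which is the claim.

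The main obstacle is only the interpretation of the word \emph{encourages}: the argument above shows that the unique zero of $\L_m$ realizes exact linear interpolation, so gradient-based minimization drives $f$ monotonically in that direction, but turning this into a quantitative deviation bound (e.g., linking a residual value of $\L_m$ to an explicit modulus of non-linearity of $f$) would require a Pinsker-type inequality together with assumptions on the output entropy, and is not needed for the qualitative lemma as stated.
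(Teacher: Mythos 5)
Your proof is correct and takes essentially the same route as the paper's: both arguments observe that driving $\L_m$ to zero forces $f(\tilde{x}) = \tilde{y}$, then unpack $\tilde{y} = \lambda f(x_i) + (1-\lambda) f(x_j)$ to conclude linear behavior along the segment. The paper phrases the last step as an equality of slopes among $x_1$, $\tilde{x}$, and $x_2$ (with somewhat informal vector division), whereas you go directly to the affine-interpolation identity $f(\lambda x_i + (1-\lambda)x_j) = \lambda f(x_i) + (1-\lambda)f(x_j)$ and note density of the Beta support plus continuity to extend to all $\lambda \in [0,1]$; these extra details make your version slightly tighter, but it is the same argument in substance.
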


\begin{proof}
Minimizing $\Expect_{x \sim \X} \brac{ \KL(\tilde{y} \| f(\tilde{x}))}$ enforces the classifier $f$ to output $\tilde{y}$ for $\tilde{x}$. If $\Expect_{x \sim \X} \brac{ \KL(\tilde{y} \| f(\tilde{x}))} = 0$, then $(\tilde{x}, \tilde{y})$ becomes a point on the classifier $f$.
\begin{equation}
\label{eq:proof}
\begin{split}
&\frac{\tilde{y}-y_1}{\tilde{x}-x_1}=\frac{(1-\lambda)(y_1-y_2)}{(1-\lambda)(x_1-x_2)}=\frac{y_1-y_2}{x_1-x_2}, \\
&\frac{\tilde{y}-y_2}{\tilde{x}-x_2}=\frac{\lambda(y_1-y_2)}{\lambda(x_1-x_2)}=\frac{y_1-y_2}{x_1-x_2}.
\end{split}
\end{equation}
Eq. \ref{eq:proof} indicates the same slope among $x_1$, $\tilde{x}$, and $x_2$.
Therefore, optimizing $\Expect_{x \sim \X} \brac{ \KL(\tilde{y} \| f(\tilde{x}))}$ encourages the classifier $f$ to behave linearly between $x_i$ and $x_j$ and we finish the proof.

\end{proof}

\subsection{Proof about Imposing Locally-Lipschitz Constraint of VMT}
\begin{lemma}
If $f$ is linear in all directions at $\tilde{x}$, then $f$ is locally-Lipschitz at $\tilde{x}$.
\end{lemma}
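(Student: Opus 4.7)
The plan is to unpack the hypothesis ``linear in all directions at $\tilde{x}$'' as saying that for every unit vector $v \in \X$ there exist $\delta_v > 0$ and a vector $m_v \in \Y$ with $f(\tilde{x} + t v) = f(\tilde{x}) + t\, m_v$ for all $|t| < \delta_v$. The first step I would take is to identify these directional slopes with the action of a single linear operator. Using the previous lemma (linear behavior on every chord through $\tilde{x}$) together with an algebraic comparison of the slopes along $v$, $w$ and $v + w$, one verifies that $v \mapsto m_v$ extends to a linear map $A : \X \to \Y$, so that $f(\tilde{x} + h) = f(\tilde{x}) + A h$ holds locally along every ray from $\tilde{x}$.

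The main obstacle is turning this ray-wise statement into a uniform one, i.e.\ producing a single radius $\delta_0 > 0$ that works simultaneously for every direction. Here I would exploit that the input space is finite dimensional, so the unit sphere $S := \{v : \|v\| = 1\}$ is compact. Continuity of $f$ together with the fact that the affine patch along each ray is uniquely pinned down by $f(\tilde{x})$ and $A$ allows a standard compactness/covering argument yielding $\delta_0 := \inf_{v \in S} \delta_v > 0$. Under the natural reading of ``linear in all directions'' in which the common neighborhood of affineness is assumed from the start, this step becomes immediate and can be skipped.

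Finally, for any $x$ with $\|x - \tilde{x}\| < \delta_0$ I would set $h = x - \tilde{x}$ and conclude
\[
\|f(x) - f(\tilde{x})\| \;=\; \|A h\| \;\leq\; \|A\|_{\mathrm{op}}\,\|h\| \;=\; L\,\|x - \tilde{x}\|,
\]
with $L := \|A\|_{\mathrm{op}} < \infty$ since $A$ is a linear operator between finite-dimensional spaces. This matches Definition~\ref{def:local} with constants $(L, \delta_0)$, so $f$ is locally-Lipschitz at $\tilde{x}$. The only non-mechanical piece of the argument is the compactness/uniform-radius step discussed above; once $A$ and $\delta_0$ are in hand, the Lipschitz bound itself is just operator-norm bookkeeping.
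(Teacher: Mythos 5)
Your proof follows the same overall strategy as the paper's---bound the directional slope at $\tilde{x}$ and read off the Lipschitz constant from that bound---but you supply substantially more scaffolding where the paper simply asserts. The paper's proof consists of a single claim: since $f$ is linear in all directions at $\tilde{x}$, ``we can assume a largest slope $M$'' on a $\delta_0$-ball, writing (with the informal abuse of dividing by a vector) $\left\|\frac{f(x)-f(\tilde{x})}{x-\tilde{x}}\right\|\le M$, and then rearranging to $\|f(x)-f(\tilde{x})\|\le M\|x-\tilde{x}\|$. It never explains why such a uniform $M$ exists, nor why a single radius $\delta_0$ suffices; both are treated as self-evident consequences of ``linear in all directions.''

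What you buy by introducing the linear operator $A$ and setting $L=\|A\|_{\mathrm{op}}$ is a concrete reason that a finite bound exists, rather than an appeal to intuition. One small caveat: the step ``$v\mapsto m_v$ extends to a linear map'' does not follow from per-ray linearity alone (there are classical examples, e.g.\ $f(x,y)=x^2y/(x^2+y^2)$, where every ray through the origin is linear but the directional slopes are not additive). You correctly flag that you need the previous lemma's chord-wise linearity, not just ray-wise linearity, to obtain additivity of $m_v$; that caveat is what keeps your argument honest. You also explicitly surface the uniform-$\delta_0$ issue, which the paper elides. In short, your proposal and the paper's proof reach the identical conclusion by the identical slope-bound mechanism; yours is the version one would write if asked to actually justify the bound $M$ that the paper takes for granted.
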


\begin{proof}
Since $f$ is linear in all directions at $\tilde{x}$, we can assume a largest slope $M$ for the surrounding points $\{x \in \X \ \big|\ ||x-\tilde{x}|| \leq \delta_0 \}$ to $\tilde{x}$ such that

\begin{equation}
\bigg|\bigg|\frac{f(x)-f(\tilde{x})}{x-\tilde{x}} \bigg|\bigg|\leq M. 
\end{equation}
Then we can get
\begin{equation}
||f(x)-f(\tilde{x})|| \leq M ||x-\tilde{x}||,
\end{equation}
which implies the local Lipschitzness at $\tilde{x}$ as in Definition \ref{def:local}, and we finish the proof.

\end{proof}

\subsection{Hyperparameters}
\label{sec:hyper}

We restrict the hyperparameter search to $\lambda_d = \sset{0, 0.01}$, $\lambda_s = \sset{0, 1}$, $\lambda_t = \sset{0.01,0.02,0.04,0.06,0.08,0.1,1}$, and $\beta = \sset{0.001,0.01,0.1,1}$. Compared with VADA, the main different setting is $\lambda_t$. We empirically find that increasing the value of $\lambda_t$ is able to improve the performance significantly. But for VADA, it will collapse to a degenerate solution if we set the same value of $\lambda_t$. We set the refinement interval \cite{Shu2018} of DIRT-T to 5000 iterations. The only exception is MNIST $\rightarrow$ MNIST-M. For this special case, we set the refinement interval to 500, and set the weight of $\L_m(f; \X_t)$ to $10^{-3}$. We use Adam Optimizer (learning rate $=0.001$, $\beta_1 = 0.5$, $\beta_2=0.999$) with an exponential moving average (momentum $= 0.998$) to the parameter trajectory. When not applying DIRT-T, we train VMT for $\sset{40000, 80000, 160000}$ iterations, with the number of iterations chosen as a hyperparameter. When applying DIRT-T, we train VMT for $\sset{40000, 80000}$ iterations as the initialization model, and train DIRT-T for $\sset{40000, 80000, 160000}$ iterations.

\begin{table}[h]
\centering
\begin{tabular}{l|ccccc}
\toprule
Task & Instance Normalization                 & $\lambda_d$     & $\lambda_s$    & $\lambda_t$     & $\beta$ \\
\midrule
MNIST $\rightarrow$ SVHN    & Yes     & $0.01$       & $1$             & $0.06$     & $0.01$ \\
MNIST $\rightarrow$ SVHN    & No     & $0.01$     & $1$         & $0.01$            & $0.001$ \\
SVHN $\rightarrow$ MNIST    & Yes, No&  $0.01$     & $0$        & $0.1$               & $0.01$ \\
MNIST $\rightarrow$ MNIST-M & Yes, No& $0.01$     & $0$             & $0.01$        & $0.01$ \\
SYN $\rightarrow$ SVHN       & Yes, No& $0.01$     & $1$       & $1$                    & $1$ \\
CIFAR $\rightarrow$ STL     & Yes, No& $0$             & $1$     & $0.1$              & $0.01$ \\
STL $\rightarrow$ CIFAR     & Yes, No& $0$             & $0$             & $0.1$      & $0.01$ \\
\midrule
\end{tabular}
\caption{
List of the hyperparameters.
}
\label{table:hyper}
\end{table}

\subsection{The Effect of $\lambda_t$ for MNIST $\rightarrow$ SVHN without Instance Normalization}
In this experiment, we compare the performance between $\lambda_t=0.02$ and $\lambda_t=0.01$ for MNIST $\rightarrow$ SVHN without Instance Normalization. We have the following four observations from Table \ref{table:m2s_wo_lambda}. First, $\lambda_t=0.02$ sometimes collapses to a degenerate solution whose accuracy is less than $20\%$. Second, when the model with $\lambda_t=0.02$ collapses, the test set accuracy on the source domain is also very small. Thus we can exclude the collapsed model at the early stage of training by checking the test set accuracy on the source domain. Third, when not applying DIRT-T (VMT@160K in the table), $\lambda_t=0.02$ outperforms $\lambda_t=0.01$ significantly except for the case that $\lambda_t=0.02$ collapses to a degenerate solution. Forth, when applying DIRT-T, $\lambda_t=0.02$ and $\lambda_t=0.01$ have similar performance.

\begin{table*}[h]
\small
\centering
\begin{tabular}{c|cccccccccc|c}
\midrule
\multicolumn{11}{c}{MNIST $\rightarrow$ SVHN without Instance-Normalized Input:}\\
\midrule
  VMT@40K:Source ($\lambda_t=0.02$)& $95.2$ & $96.5$ & $98.1$ & $96.7$ & $98.5$ & $98.3$& $98.4$& $98.6$& $98.5$& $10.4$ & $88.9 \pm 27.6$\\
 \midrule
 VMT@40K ($\lambda_t=0.02$)        & $60.6$ & $60.5$ & $58.8$ & $47.3$ & $59.9$ & $53.3$& $55.4$& $52.8$& $47.0$& $14.6$ & $51.0 \pm 13.8$\\
 \midrule
 VMT@160K ($\lambda_t=0.02$)       & $73.4$ & $73.7$ & $68.5$ & $54.3$ & $67.0$ & $69.8$& $69.4$& $67.1$& $65.5$& $15.6$ & $62.4 \pm 17.3$\\
\midrule
VMT + DIRT-T@160K ($\lambda_t=0.02$)& $96.0$ & $95.5$ & $95.4$ & $92.0$ & $87.3$ & $80.1$& $79.6$& $74.4$& $73.7$& $15.6$ & $79.0 \pm 23.9$\\
  \midrule
 VMT@40K ($\lambda_t=0.01$)         & $57.7$ & $57.6$ & $57.4$ & $50.1$ & $54.1$ & $50.0$& $52.7$& $40.1$& $53.9$& $45.5$ & $51.9 \pm 5.7$\\
\midrule
 VMT@160K ($\lambda_t=0.01$)        & $64.6$& $64.6$ & $58.3$ & $61.4$ & $57.3$ & $63.7$& $64.5$& $49.4$& $56.9$& $52.0$ & $59.3 \pm 5.5$\\
\midrule
 VMT + DIRT-T@160K ($\lambda_t=0.01$)& $95.9$ & $95.9$ & $95.3$ & $95.3$ & $95.1$ & $83.6$& $82.8$& $80.4$& $79.9$& $76.6$ & $88.1 \pm 8.0$\\
\midrule
\end{tabular}
\caption{Comparison between $\lambda_t=0.02$ and $\lambda_t=0.01$ for MNIST $\rightarrow$ SVHN without Instance-Normalized Input. VMT@40K:Source denotes the test set accuracy of VMT on the source domain at iteration $40000$. } 
\label{table:m2s_wo_lambda}
\end{table*}

\subsection{The Effect of $\lambda_t$ for MNIST $\rightarrow$ SVHN with Instance Normalization}
Compared with VADA, we increase the value of $\lambda_t$ for several tasks, as shown in Table \ref{table:hyper}. Because we find that increasing the value of $\lambda_t$ can improve the performance significantly. But for VADA, it will collapse to a degenerate solution quickly if we set the same value of $\lambda_t$. Table \ref{table:m2s_lambda} shows the results of $\lambda_t=0.06$ and $\lambda_t=0.01$ for MNIST $\rightarrow$ SVHN with instance normalization, and we can observe that $\lambda_t=0.06$ outperforms $\lambda_t=0.01$ significantly.

\begin{table*}[h]
\small
\centering
\begin{tabular}{c|cccccccccc|c}
\midrule
\multicolumn{11}{c|}{MNIST $\rightarrow$ SVHN with Instance-Normalized Input:} & Average\\
\midrule
 VMT@40K($\lambda_t=0.06$)         & $86.1$ & $85.6$ & $86.1$ & $85.8$ & $85.2$ & $85.6$& $84.3$& $84.0$& $84.5$& $84.3$ & $85.2 \pm 0.8$\\
\midrule
 VMT + DIRT-T@160K($\lambda_t=0.06$) & $96.0$ & $95.9$ & $95.6$ & $95.5$ & $95.4$ & $95.3$& $95.3$& $95.1$& $94.2$& $92.8$ & $95.1 \pm 1.0$\\
\midrule
 VMT@40K($\lambda_t=0.01$)         & $72.8$ & $75.3$ & $74.8$ & $74.4$ & $73.7$ & $72.3$& $72.3$& $73.8$& $73.9$& $71.7$ & $73.5 \pm 1.2$\\
\midrule
 VMT + DIRT-T@160K($\lambda_t=0.01$) & $88.8$ & $87.3$ & $86.4$ & $85.4$ & $85.4$ & $84.4$& $83.8$& $83.4$& $82.0$& $79.9$ & $84.7 \pm 2.6$\\
\midrule
\end{tabular}
\caption{Comparison between $\lambda_t=0.06$ and $\lambda_t=0.01$ for MNIST $\rightarrow$ SVHN with Instance-Normalized Input.} 
\label{table:m2s_lambda}
\end{table*}

\subsection{Comparing with Mixup on Intermediate Layers}
In this experiment, we investigate the performance of mixup on intermediate layers. Table \ref{table:intermediate} shows the performance of mixup on the input of the last fully-connected layer. We also evaluated other intermediate layers, and mixup on logits performs the best.

\begin{table*}[h]
\centering
\begin{tabular}{c|cccccccccc|c}
\midrule
\multicolumn{11}{c|}{MNIST $\rightarrow$ SVHN with Instance-Normalized Input:} & Average\\
\midrule
  $\text{VMT}_{\text{logits}}$         & $86.1$ & $85.6$ & $86.1$ & $85.8$ & $85.2$ & $85.6$& $84.3$& $84.0$& $84.5$& $84.3$ & $85.2 \pm 0.8$\\
\midrule
  $\text{VMT}_{\text{logits}}$ + DIRT-T & $96.0$ & $95.9$ & $95.6$ & $95.5$ & $95.4$ & $95.3$& $95.3$& $95.1$& $94.2$& $92.8$ & $95.1 \pm 1.0$\\
\midrule
  $\text{VMT}_{\text{inter}}$         & $85.5$ & $83.9$ & $84.0$ & $84.1$ & $77.9$ & $74.2$& $70.8$& $47.4$& $32.4$& $16.3$ & $65.7 \pm 24.8$\\
\midrule
  $\text{VMT}_{\text{inter}}$ + DIRT-T & $95.9$ & $94.9$ & $94.2$ & $94.1$ & $87.1$ & $82.7$& $75.1$& $51.9$& $32.6$& $17.6$ & $72.6 \pm 28.6$\\
\midrule
\end{tabular}
\caption{Comparison between mixup on logits and mixup on an intermediate layer.} 
\label{table:intermediate}
\end{table*}

\clearpage

\subsection{Dynamic Accuracy Results of VAT and VMT}
Compared with VAT, one advantage of VMT is the low computational cost. Figure \ref{fig:time} shows the dynamic results of the accuracy over time. We can observe that the model trained with VMT increases the accuracy much faster than the one trained with VAT.

 \begin{figure}[h]
\centering
 \includegraphics[width=0.4\textwidth]{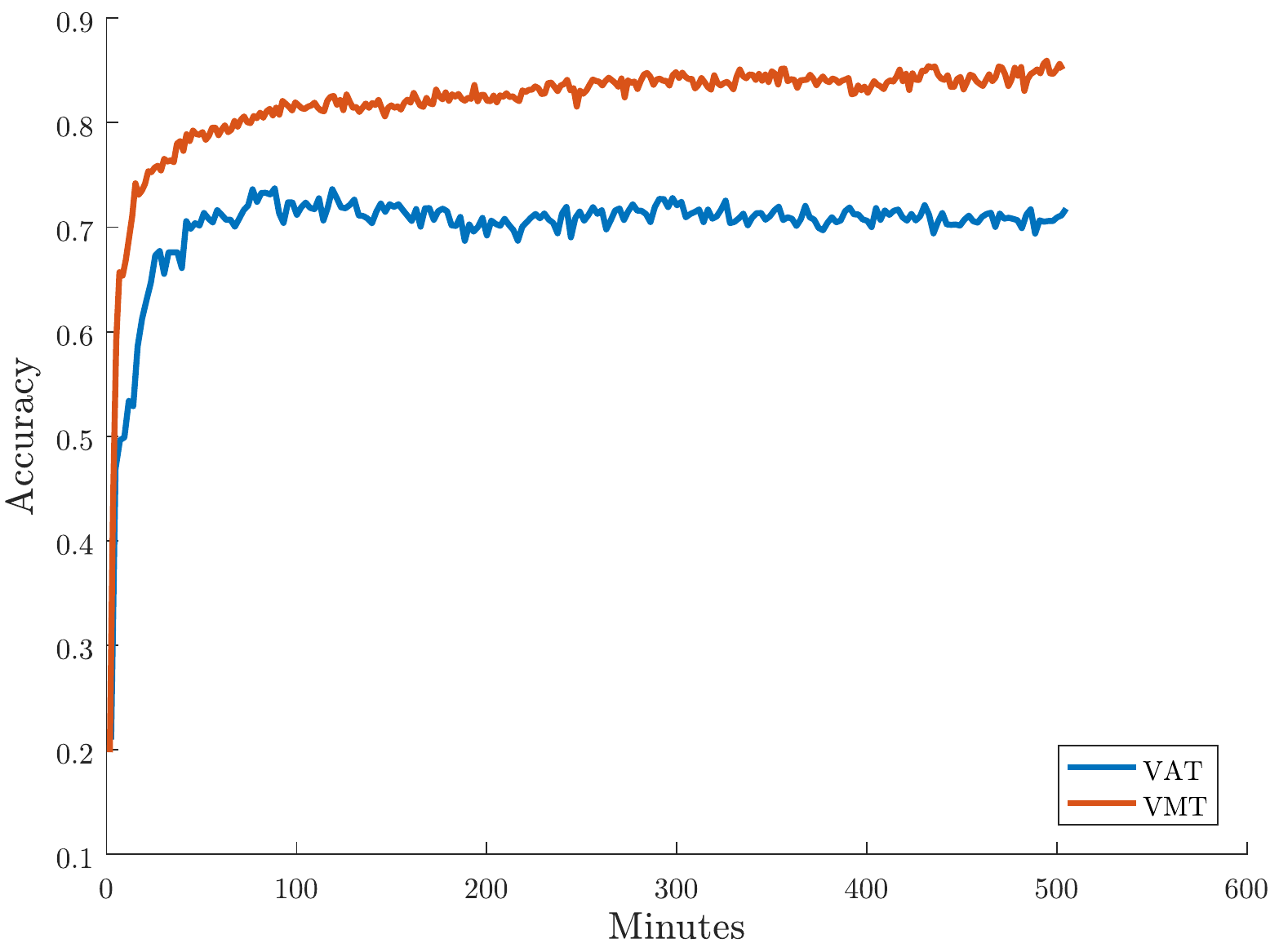}
\caption{
Dynamic test set accuracy on the adaptation task of MNIST $\rightarrow$ SVHN with instance normalization. The blue line is for VAT and the red line is for VMT.
}
\label{fig:time}
\end{figure}

\end{document}